\documentclass[letterpaper, 10 pt, conference]{ieeeconf}  

\IEEEoverridecommandlockouts                              

\usepackage{tikz}
\usetikzlibrary{arrows.meta,calc,positioning, arrows, backgrounds, fit}

\usepackage{enumerate}
\usepackage{amsmath,amssymb,amsfonts}
\usepackage{mathtools}
\usepackage{graphicx}
\usepackage{subcaption}
\usepackage{bm}
\usepackage{algorithm}
\usepackage{booktabs}
\usepackage{algpseudocode}
\newtheorem{proposition}{Proposition}

\newtheorem{remark}{Remark}
\newtheorem{problem}{Problem}
\usepackage[nolist,nohyperlinks]{acronym}
\begin{acronym}
	\acro{bmc}[BMC]{Bounded Model Checking}
	\acro{odd}[ODD]{Operational Design Domain}
	\acro{acc}[ACC]{Adaptive Cruise Control}
	\acro{mpc}[MPC]{Model Predictive Control}
	\acro{lka}[LKA]{Lane Keeping Assist}
	\acro{qp}[QP]{Quadratic Programming}
 	\acro{dnn}[DNN]{Deep Neural Network}
    \acro{ai}[AI]{Artificial Intelligence}
    \acro{cis}[CIS]{Control Invariant Set}
    \acro{smt}[SMT]{Satisfiability Modulo Theory}
    \acro{cps}[CPS]{Cyber-Physical Systems}
    \acro{ltl}[LTL]{Linear Temporal Logic}
    \acro{ekf}[EKF]{Extended Kalman Filter}
    \acro{clf}[CLF]{Control Lyapunov Function}
    \acro{cbf}[CBF]{Control Barrier Function}
    \acro{lpv}[LPV]{Linear Parameter-Varying}
    \acro{pwa}[PWA]{Piece-Wise Affine}
    \acro{lti}[LTI]{Linear Time Invariant}
    \acro{hjb}[HJB]{Hamilton–Jacobi-Bellman}
    \acro{agc}[AGC]{Assume-Guarantee Contract}
    \acro{stl}[STL]{Signal Temporal Logic}
    \acro{ros}[ROS]{Robot Operating System}
    \acro{rv}[RV]{Runtime Verification}
    \acro{rpi}[RCIS]{Robust Controlled Invariant Set}
    \acro{adas}[ADAS]{Advanced Driver Assistance System}
    \acro{ddpg}[DDPG]{Deep Deterministic Policy Gradient}
    \acro{method}[CoCoSaFe]{\underline{Co}mpositional \underline{Co}de-level \underline{Sa}fety verification \underline{F}ram\underline{e}work}
\end{acronym}
\newcommand{\Obs}{\mathrm{Obs}} 

\title{\LARGE \bf
Monitoring autonomous persistent surveillance missions using invariance
}

\author{Vladislav Nenchev and Prodromos Sotiriadis
\thanks{V. Nenchev and P. Sotiriadis are with Department of Electrical and Computer Engineering, University of the Bundeswehr Munich, 85579 Neubiberg, Germany.
        {\{\tt\small vladislav.nenchev, prodromos.sotiriadis\}@unibw.de}}%
}

\begin{document}

\maketitle
\thispagestyle{empty}
\pagestyle{empty}

\begin{abstract}
This paper studies runtime monitoring for persistent surveillance by autonomous robots when the autonomy stack is a black box. The environment is partitioned into finitely many parts, each carrying an uncertainty state that decreases when observed and increases otherwise. We model the closed loop as a state-dependent hybrid system with linear parameter varying dynamics and design a monitor based on an invariant computed offline. As this invariant is typically hard to obtain for large to-be-surveyed spaces, we propose a compositional monitor obtained by decentralized computation of low-dimensional invariant sets for each uncertainty region, and checking their conjunction online. Under common independence assumptions, the compositional monitor is sound and complete with respect to the full-system invariant. The approach is applied in a case study with a real robot persistently monitoring a labyrinth, emphasizing its applicability in practice. 
\end{abstract}

\section{Introduction}
In persistent surveillance, an agent moves through an environment to collect information about specific targets over time. It is applicable across a wide range of robotic problems, such as ocean monitoring \cite{smith_persistent_2011} and forest fire detection \cite{casbeer_cooperative_2006}. In many deployed autonomous systems, navigation stacks mix \ac{ai}-based modules with proprietary or otherwise black-box components. Unanticipated disturbances, sensor faults, and software vulnerabilities can trigger navigation stalls, collisions, or critical failures. In addition, cyber attacks that spoof sensors or manipulate control channels can deliberately degrade performance or subvert mission objectives. This motivates a runtime monitoring layer that reasons from measured state (and, when available, provided inputs) and raises early alerts before loss of progress manifests operationally.

This paper studies monitoring the operation of a robot performing persistent surveillance when the surveyed workspace is partitioned into a finite number of parts. We model the uncertainty state of each part as a \ac{lpv} system, as we assume that the robot has the ability to take (noisy) measurements of the state; this is a typical setting that can be found in many persistent surveillance approaches such as \cite{lan_planning_2013,chen_deep_2020,ha_periodic_2019,hall_persistent_2025}.
The uncertainty \emph{decreases} while remaining within the part and \emph{increases} otherwise.
We represent the closed loop as a state-dependent hybrid system characterized by \ac{pwa} dynamics, incorporating bounded, region-dependent \ac{lpv} uncertainty, and compute the \ac{rpi} \cite{blanchini_set_1999}. Intuitively, if the current measurement lies in a \ac{rpi}, then for all admissible inputs and parameter variations consistent with the current region, the next state also remains inside. This property of the \ac{rpi} is used for runtime monitoring.

Computing the maximal \ac{rpi} in the \emph{full} space that contains the robot state, all uncertainties of the to-be-surveyed regions, and inputs becomes intractable as the number of parts of the environment grows. We propose a \emph{compositional} construction: for each part we compute the \ac{rpi} in robot pose, input, and part-uncertainty only, and the online monitor checks the \emph{conjunction} of these per-part memberships. Assuming that the uncertainties of the regions are independent, we show that the intersection of per-part \ac{rpi} is \emph{equivalent} to the full-system \ac{rpi}. Thus, the compositional monitor is both \emph{sound} and \emph{complete}. Its complexity scales linearly with the number of parts. We demonstrate the effective application of the proposed monitor in a case study with a four-wheeled differential-drive robot persistently surveying a labyrinth. 

The contributions of the paper are as follows:
\begin{enumerate}[i.] 
  \item Formalize persistent surveillance with black-box control as a hybrid system whose uncertainties evolve via bounded \ac{lpv} factors.
  \item Compute per-part \ac{rpi} and prove their \emph{soundness} and \emph{completeness} with respect to the full \ac{rpi}.
  \item Propose a monitor based on the compositional \ac{rpi} with polyhedral modes and constant-time online membership checks suitable for embedded deployment.
  \item Demonstrate the effectiveness of the approach to monitor persistent surveillance of a labyrinth by a real robot.
\end{enumerate}

The remainder of the paper is structured as follows. Section~II discusses related work. Section~III states the monitoring problem for persistent surveillance and introduces a running example used throughout the paper. Section~IV presents the monitoring approach based on formalizing the hybrid model, obtaining invariants, and showing how to use them for monitoring. Section~V evaluates the approach in a laboratory case study. Section~VI discusses assumptions, limitations, and extensions; Section~VII concludes the paper.

\section{Related Work}

A large body of work focuses on patrols over partitioned environments, ranging from waypoint/visitation sequencing \cite{yu_persistent_2015,jones_information-guided_2015} to hybrid planners that account for region-dependent dynamics \cite{hall_persistent_2025}, as well as multi-robot coordination \cite{zhou_optimal_2018}. These approaches output trajectories or visitation schedules and typically provide completeness or optimality guarantees under the modeled dynamics and costs, assuming access to (or direct control over) the planning and control interfaces. Necessary and sufficient conditions on robot speed controllers along a predefined path were derived for keeping the accumulation function for persistent surveillance bounded \cite{smith_persistent_2011-1}.
Prior work has also addressed autonomy under formal specifications \cite{nenchev_receding_2016} and an event-triggered task structure \cite{nenchev_event-driven_2018}, in which policies are computed from modeled dynamics under uncertainty. Our runtime monitoring is complementary to all of these approaches: a planner can be wrapped by our monitor to guard against stalls, sensor faults, or adversarial perturbations that invalidate planning assumptions.

\ac{rv} frameworks monitor executions against formal specifications with strong correctness guarantees \cite{leucker_brief_2009}. Temporal logic enables quantitative monitoring of real-valued traces \cite{maler_monitoring_2004,fainekos_robustness_2009}. Although \ac{rv} offers rigorous semantics and mature tooling, monitoring rich specifications can be computationally heavy onboard resource-constrained robots, and many approaches assume gray-/white-box access to the system. Progress and failure monitors based on stability certificates and short-horizon rollouts \cite{nenchev_monitoring_2026} were proposed for goal-directed navigation, which, however, are not easily extendable for persistent surveillance.

Hamilton-Jacobi reachability can also be used to provide monitors with safety guarantees via value functions over the state space \cite{mitchell_time-dependent_2005}. The resulting certificates reduce runtime checks to set membership, but memory and computation requirements suffer from the curse of dimensionality, thus, limiting scalability for robotic systems. A \ac{cbf} may enforce forward invariance via solving a \ac{qp} online, offering safety guarantees with minimal intervention \cite{ames_control_2017}. Model predictive control yields constraint satisfaction and robustness through tube/robust formulations \cite{mayne_model_2014}. They require either solving optimization problems at runtime or access to the control loop, which may be incompatible with black-box autonomy.

Compared to logic- or optimization-heavy monitors, our method relies on precomputed \ac{rpi} sets and set-membership checks online, while retaining a provably sound and complete compositional construction that scales to larger environments.

\section{Problem Statement}

Let a robot with pose $x_k \in X$ at discrete time $k\in\mathbb{N}$ and a bounded velocity $u_k \in U\subset \mathbb{R}^2$ move according to
\begin{equation}\label{eq:motion}
x_{k+1} = x_k + u_k T_s,
\end{equation}
where $T_s$ is the sampling time. The autonomy stack that generates $u_k$ is treated as a \emph{black box}.

For the persistent surveillance setup, consider a bounded planar workspace $X \subset \mathbb{R}^2$ partitioned into finitely many parts $P=\{P_1,\ldots,P_N\}, P_i\subset X$ with pairwise disjoint interiors; parts may meet only on their boundaries (e.g., along shared edges or vertices). For each part $P_i$, let $\Obs_i\subset X$ denote the set of robot poses from which $P_i$ is observed. Note that $\Obs_i$ may also consist of multiple (disconnected) parts. The uncertainty of part $P_i$ is denoted by a scalar $z_{i,k}\in\mathbb{R}_{+}$ at time $k$. Uncertainty \emph{decreases} while $P_i$ is effectively observed, i.e., when $x\in\Obs_i$, and \emph{increases} otherwise. For clarity of exposition, we assume an obstacle-free workspace, but we show how the presence of obstacles can be treated in the case study. 
\paragraph*{Assumptions}
(i) observability sets for each part are known or conservatively over-approximated; (ii) each $z_{i,k}$ evolves independently of $z_{j\neq i}$ given $x_k$ (no cross-coupling among parts beyond the shared pose); (iii) uncertainty updates are bounded and monotone with observation status, e.g., multiplicative LPV factors with decay when observed and growth when unobserved; (iv) disturbances/parametric variations (including benign faults or adversarial perturbations within specification) are captured by these bounds; (v) at runtime, the monitor receives $x_k$; it may also receive $u_k$ (optional) and either measured $z_{i,k}$ or computable proxies derived from sensing/coverage logs. 
\begin{problem}
Given the available signals, design a \emph{runtime monitor} that determines whether the persistent surveillance is being met, i.e., all parts remain within acceptable uncertainty, $z_{i,k}\in[0,z_i^{\mathrm{crit}}]$ for desired thresholds $z_i^{\mathrm{crit}}>0$ , and that raises (anticipatory) \emph{alerts} when a violation is imminent.
\end{problem}
The monitor should run at predictable, low per-step cost suitable for onboard deployment, provide per-part diagnostics (which regions are at risk and by how much), scale to large $N$, and remain robust to bounded disturbances, parameter drift, and adversarial perturbations that are consistent with the specified observation model.

\begin{figure}[t]
\centering
\begin{tikzpicture}[scale=0.8, >=LaTeX]

\def\Xmax{5}
\def\Ymax{3.5}
\def\S{0.7}            
\coordinate (P1) at (1,1.5);
\coordinate (P2) at (4.,1.5);
\coordinate (X0) at (2.5,0.5); 

\tikzset{
  ws/.style={draw=black, thick},
  gridl/.style={step=1cm, very thin, color=black!10},
  footprint/.style={fill=blue!8, draw=blue!60, thick},
  traj/.style={draw=black, line width=1.1pt, -{Latex[length=3mm]}},
  point/.style={circle, draw=black, fill=black, inner sep=1.4pt},
  label/.style={font=\scriptsize, fill=white, inner sep=1pt},
    body/.style={line width=0.8pt, draw=black, fill=white},
  wheelfill/.style={fill=black!80, draw=black!80}
}

\draw[gridl] (0,0) grid (\Xmax,\Ymax);
\draw[ws] (0,0) rectangle (\Xmax,\Ymax);

\foreach \x in {0,1,...,5} {
  \draw (\x,0) -- ++(0,0.08);
  \node[below, font=\scriptsize] at (\x,0) {\x};
}
\foreach \y in {0,1,...,3} {
  \draw (0,\y) -- ++(0.08,0);
  \node[left, font=\scriptsize] at (0,\y) {\y};
}
\node[below right] at (2.25,-0.2) {$x$};
\node[below left]  at (-0.2,0.55*\Ymax) {$y$};

\draw[footprint] ($(P1)+(-\S,-\S)$) rectangle ($(P1)+(\S,\S)$);
\draw[footprint] ($(P2)+(-\S,-\S)$) rectangle ($(P2)+(\S,\S)$);

\node[point] at (P1) {};
\node[point] at (P2) {};
\node[above left=1pt of P1, label] {$P_1$};
\node[above right=1pt of P2,  label] {$P_2$};

\draw[traj]
  (X0) -- ++(0.0,0.9) -- ++(-0.3,0.7) -- (P1)
  -- ++(0.9,0.0) -- ++(1.0,0.0) -- ++(1.1,0.0) -- (P2)
  -- ++(-0.8,0.4) -- ++(-1.0,-0.2) -- ++(-1.2,-0.5) -- ($(P1)+(0.3,-0.4)$);

\node[point] at (X0) {};
\begin{scope}[shift={(X0)}]
  \def\bl{0.28}   
  \def\bw{0.18}   
  \def\whl{0.22}  
  \def\wth{0.03}  
  \def\cr{0.04}   

  \draw[body] (-\bw,-\bl) rectangle (\bw,\bl);

  \draw[wheelfill] (-\bw-2*\wth, -\whl) rectangle (-\bw, \whl);
  \draw[wheelfill] ( \bw,        -\whl) rectangle ( \bw+2*\wth, \whl);

  \draw[wheelfill] (0, \bl+0.06) circle (\cr);

  \draw[->, line width=0.8pt] (0,0) -- (0,\bl+0.18);
\end{scope}
\node[below right=1pt of X0, label] {start};

\begin{scope}[shift={(0.3,3.2)}]
  \draw[footprint] (-0.2,-0.2) rectangle +(0.4,0.4);
  \node[right=6pt] at (0.2,0) {\scriptsize square sensing footprint};
  \draw[traj] (-0.2,-0.55) -- +(0.4,0.0);
  \node[right=6pt] at (0.2,-0.55) {\scriptsize example trajectory};
\end{scope}

\end{tikzpicture}
\caption{Example: A robot persistently surveying two parts $P_1,P_2$ with a sample patrol trajectory.}
\label{fig:running-example}
\end{figure}

\emph{Example.}
We instantiate $X = [0,5]\times[0,3.5]$ with two parts centered at $p_1=(1,1.5)$ and $p_2=(4,1.5)$. Each part has a \emph{square sensing footprint} of half-side $r=0.7$, i.e., $\mathsf{obs}_i(x)=1 \;\;\Longleftrightarrow\;\; \|x-p_i\|_{\infty} \le r ,$ so the observation regions are polyhedral and used \emph{as is} in computation. The sampling time is $T_s=1$\,s and the robot follows the single-integrator model $x_{k+1}=x_k+u_k T_s$ with input bound $\|u_k\|_{\infty}\le 0.6$ (i.e., $U=\{u:\|u\|_{\infty}\le 0.6\}$). Uncertainties have to satisfy $0 \le z_i \le z^{\text{crit}}=4$. Observed/unobserved multiplicative factors lie in intervals $a \in [0.65,0.80]\,$ and $b \in [1.05,1.15]\,$; for robustness we use the monotone worst-case vertex $(a_{\max},b_{\max})=(0.80,1.15)$ in each region. This example shown in Figure~\ref{fig:running-example} will illustrate the proposed solution throughout the paper.

\section{Invariant-Based Monitoring}
\begin{figure}
  \centering
  \begin{tikzpicture}[->,>=stealth',auto,semithick,    arrow/.style={->, thick}]
    \node[draw, rectangle, rounded corners, fill=gray!10, minimum width=8.5cm, minimum height=1.5cm] (offline) at (-1.5,2) {};
     \node[anchor=north west] at (offline.north west) {\textbf{Offline}};
    \node[draw, rectangle, rounded corners, fill=gray!10, minimum width=8.5cm, minimum height=1.5cm] (online) at (-1.5,0.3) {};
     \node[anchor=north west] at (online.north west) {\textbf{Online}};
    \node[draw, rectangle, rounded corners, minimum width=2cm, fill=blue!20, align=left] (model) at (-4,1.85) {Uncertainty \&\\ robot model};
        \node[draw, rectangle, rounded corners, minimum width=2cm, fill=blue!20, align=left] (op) at (-1.5,1.85) {Operational\\ limits};
    \node[draw, rectangle, rounded corners, minimum width=2cm, fill=blue!20, align=left] (lyap) at (1,1.85) {Invariant\\set};
    \node[draw, rectangle, rounded corners, minimum width=2cm, fill=green!30, align=left] (ekf) at (-1.5,0.3) {Measured\\ state (\& input)};
    \node[draw, rectangle, rounded corners, minimum width=2cm, fill=green!30, align=left] (monitor) at (1,0.3) {Runtime\\ monitoring};
      \draw[arrow] (model) -- (op);
            \draw[arrow] (op) -- (lyap);
    \path (lyap) edge[out=270,in=90] (monitor)
          (ekf) edge (monitor);
  \end{tikzpicture}
  \caption{Overview of the monitoring approach: In the \textit{Offline} stage (top) the certificate is synthesized from the robot motion model and operational limits; in the \textit{Online} stage (bottom) the measured state (and, optionally, the input) are used to check if operation remains in the invariant.}
  \label{fig:overview}
\end{figure}
This section presents the key components of the approach as shown in Fig.\,\ref{fig:overview}: formalizing the hybrid model for persistent surveillance, introducing the \ac{rpi} and a compositional construction with soundness/completeness guarantees, computing them, and using invariants as monitoring primitives.

\subsection{Hybrid model}
We model the closed loop as a state-dependent hybrid system that contains the motion of the robot \eqref{eq:motion} and multiplicative, region-dependent uncertainty dynamics:
\begin{align}
z_{i,k+1} &= \gamma_i(x_k)\, z_{i,k}, \qquad i=1,\dots,N, \label{eq:unc}
\end{align}
where $\gamma_i(x)$ is changes according to the robot’s location:
\[
\gamma_i(x) \in 
\begin{cases}
[a_i^{\min},a_i^{\max}] \subset (0,1), & x \in \mathrm{Obs}_i,\\[2pt]
[b_i^{\min},b_i^{\max}] \subset (1,\infty), & \text{else}.
\end{cases}
\]
For each part $i$, we build a PWA system in the state $\xi_i=(x,z_i,u)$ with two modes:
\[
\begin{aligned}
\text{obs}_i: & x^+=x+uT_s,\quad z_i^+=a_i^{\max} z_i,\quad x\in\mathrm{Obs}_i;\\
\text{unobs}_i: & x^+=x+uT_s,\quad z_i^+=b_i^{\max} z_i,\quad x\in X\setminus\mathrm{Obs}_i,
\end{aligned}
\]
subject to the joint domain $D_i:=X\times[0,z_i^{\mathrm{crit}}]\times U$. Further, define guards $G_i^{\mathrm{obs}} := \{x\in X \mid x\in\Obs_i\}$ and
$G_i^{\mathrm{ubobs}} := \{x\in X \mid x\notin\Obs_i\}$.

\emph{Example (revisited).}
In our running example, for each $i\in\{1,2\}$ we build a PWA subsystem in $[x^\top,\,z_i]^\top\in\mathbb{R}^3$ with input $u\in\mathbb{R}^2$ and two mode maps:
\[
\text{obs}_i:\;\; \begin{cases} x^+=x+uT_s, \\ z_i^+=a_{\max}\,z_i, \end{cases}
\qquad
\text{unobs}_i:\;\; \begin{cases} x^+=x+uT_s, \\ z_i^+=b_{\max}\,z_i, \end{cases}
\]
guarded by $x\in\mathrm{Obs}_i:=\{\,\|x-p_i\|_\infty\le r\,\}$ and $x\in X\setminus\mathrm{Obs}_i$, respectively. Because $X\setminus\mathrm{Obs}_i$ is non-convex, it is represented by a union of polyhedral sets with a separate linear-time-invariant-mode per set. To reduce the complexity of the example, we consider only the mode between the two to-be-observed parts, and the overall hybrid system consists of three modes. 

\subsection{Invariants}
Let $\xi_k := [\,x_k^\top,\ [z_{1,k},\dots,z_{N,k}],\ u_k^\top\,]^\top$ denote the joint state–input vector with domain
\[
D := X\times\prod_{i=1}^N[0,z_i^{\mathrm{crit}}]\times U.
\]
A set $\mathcal{S}$ is \ac{rpi} for \eqref{eq:motion}–\eqref{eq:unc} if, for all admissible $u_k\in U$ and for all factors $\gamma_i(\cdot)$ consistent with the active region at $x_k$, $\xi_k \in \mathcal{S} \;\Rightarrow\; \xi_{k+1} \in \mathcal{S}$.

Computing the maximal full-system \ac{rpi} $\mathcal{S}$ scales poorly with the number of parts $N$, making exact computation and storage quickly prohibitive. Thus, we compute per-part \ac{rpi}'s and intersect them to retain correctness. For each part $i$, consider the subsystem over the state space $(x,z_i)$, dynamics \eqref{eq:motion} with the input set $U$ and \eqref{eq:unc} only for $z_i$, and the corresponding region guards. Then, compute its maximal \ac{rpi} $\mathcal{S}_i^{\max}$ in $D_i := X\times[0,z_i^{\mathrm{crit}}]\times U$. For multiplicative updates $z_i^+=\gamma_i z_i$ with $\gamma_i$ bounded per region, the next-step map is monotone in $\gamma_i$. Hence, the \ac{rpi} set coincides with the set computed at the region-wise \emph{worst-case} vertices $(a_i^{\max}, b_i^{\max})$ (equivalently, intersect vertex-wise invariants if multiple vertices are used). For a set $S_i\subseteq D_i$, define the one–step predecessors in mode $i$ as
\[
\mathrm{Pre}_i(S_i){:=}\bigl\{\,\hat{x}\in D_i:\ f_i(\hat{x})\in S_i,\ \forall u\in U,\ \forall\,\gamma_i\in\Gamma_i(x)\,\bigr\}.
\]
This leads to the following proposition.

\begin{proposition}\label{prop:mod-eq-full}
Assume:
\begin{enumerate}[(i)] 
\item $\forall i$, $z_i^+$ in~\eqref{eq:unc} depends only on $z_i$ and the guard $Obs_i$, and not on $z_j$ with $j\neq i$;
\item The full system and each per-part subsystem use the same workspace set $X$ and input set $U$;
\item The full system and each per-part subsystem use the same guards $G^{\text{obs}}_i$, $G^{\text{unobs}}_i$ and uncertainty/LPV bounds.
\end{enumerate}
Let $\mathcal{S}\subseteq D$ denote the maximal \ac{rpi} set for the full system, and let $\mathcal{S}_i^{\max}\subseteq D_i$ denote the maximal \ac{rpi} set for the $i$th per-part subsystem. Then, with $i=1,\dots,N$ and 
\[
\hat{\mathcal{S}} \;:=\; \big\{[x^\top,z^\top,u^\top]^\top\in D \,\big|\, [x^\top,z_i,u^\top]^\top\in \mathcal{S}_i^{\max}\ \ \forall i\big\},
\]
$\mathcal{S}=\hat{\mathcal{S}}$ holds.
\end{proposition}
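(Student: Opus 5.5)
We prove the two inclusions $\hat{\mathcal{S}}\subseteq\mathcal{S}$ and $\mathcal{S}\subseteq\hat{\mathcal{S}}$ separately. Both rely on one structural fact. For each part $i$, let $\pi_i:D\to D_i$ be the projection $[x^\top,z^\top,u^\top]^\top\mapsto[x^\top,z_i,u^\top]^\top$. Assumptions (i)--(iii) say that one step of the full system, projected by $\pi_i$, is exactly one step of the $i$th subsystem from $\pi_i(\xi)$. The position update and the admissible next inputs $u^+\in U$ are shared by assumption (ii). The $z_i$ update uses only $z_i$ and the guard $\Obs_i$, with the same LPV bounds, by assumptions (i) and (iii). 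Moreover, the set of full successors is the product of the per-coordinate choices: for fixed $(x,u^+)$ the factors $\gamma_i$ can be chosen independently in each coordinate. We state this as a lemma first.

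\emph{Inclusion $\hat{\mathcal{S}}\subseteq\mathcal{S}$.} Since $\mathcal{S}$ is the maximal \ac{rpi} set in $D$, it suffices to show that $\hat{\mathcal{S}}$ is itself \ac{rpi}. Take $\xi\in\hat{\mathcal{S}}$ and any admissible successor $\xi^+$. By the lemma, $\pi_i(\xi^+)$ is an admissible successor of $\pi_i(\xi)\in\mathcal{S}_i^{\max}$ in subsystem $i$. Invariance of $\mathcal{S}_i^{\max}$ then gives $\pi_i(\xi^+)\in\mathcal{S}_i^{\max}$ for every $i$. We also need $\xi^+\in D$; this holds because each $\mathcal{S}_i^{\max}\subseteq D_i$ forces $x^+\in X$, $z_i^+\in[0,z_i^{\mathrm{crit}}]$ and $u^+\in U$. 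Hence $\xi^+\in\hat{\mathcal{S}}$, so $\hat{\mathcal{S}}$ is \ac{rpi}.

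\emph{Inclusion $\mathcal{S}\subseteq\hat{\mathcal{S}}$.} Fix $i$ and show $\pi_i(\mathcal{S})\subseteq\mathcal{S}_i^{\max}$. By maximality of $\mathcal{S}_i^{\max}$, it suffices to show that $\pi_i(\mathcal{S})$ is \ac{rpi} for subsystem $i$. Take $\eta=\pi_i(\xi)$ with $\xi\in\mathcal{S}$, and any subsystem successor $\eta^+=(x^+,z_i^+,u^+)$. We lift it to a full successor: keep the same $x^+$, $u^+$ and $\gamma_i$, and pick arbitrary admissible $\gamma_j\in\Gamma_j(x)$ for $j\neq i$. This is a valid full transition, using assumption (i) and the product structure. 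Invariance of $\mathcal{S}$ then puts the lifted state in $\mathcal{S}$, so $\eta^+\in\pi_i(\mathcal{S})$. Since $\pi_i(\mathcal{S})\subseteq D_i$, we conclude $\pi_i(\mathcal{S})\subseteq\mathcal{S}_i^{\max}$ for all $i$, and therefore $\mathcal{S}\subseteq\hat{\mathcal{S}}$.

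\emph{Main obstacle.} The delicate step is the lemma, specifically the product (rectangularity) structure of the successor set. In the first inclusion, the successor quantifier ranges over joint choices, so we must check that every joint successor projects to a subsystem successor; this direction is immediate. In the second inclusion, every subsystem successor must lift to a joint one, which needs the guards and the nonempty factor intervals for $j\neq i$ to be evaluated at the same $x$. Both directions would fail if the full system used different worst-case vertices, or if a mode decomposition of $X\setminus\Obs_i$ coupled parts. We avoid this by working with the worst-case vertex dynamics $(a_i^{\max},b_i^{\max})$ as in the definition of $\mathrm{Pre}_i$, where the successor in $z_i$ is a function of $(x,z_i)$ alone. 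The argument can equivalently be phrased as a fixed-point iteration: show $\mathrm{Pre}(\hat{S})=\widehat{\mathrm{Pre}_i(S_i)}$ for product-type sets and induct on the backward iteration from $D$. We would mention this only as a remark, since the maximality argument above avoids convergence issues.
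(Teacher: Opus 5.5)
Your proof is correct and is essentially the paper's argument: both inclusions come from maximality, by showing that $\hat{\mathcal{S}}$ is invariant for the full system and that the projection $\pi_i(\mathcal{S})$ is invariant for the $i$th subsystem. Beyond that, you make explicit the lifting step (extending a subsystem realization $\gamma_i$ to a full one by picking any $\gamma_j\in\Gamma_j(x)$, $j\neq i$), which the paper's projection argument uses only implicitly.
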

\begin{proof}
We show both inclusions.

\medskip\noindent\emph{1) $\mathcal{S}\subseteq \hat{\mathcal{S}}$.}
Fix $i\in\{1,\dots,N\}$ and consider the projection
\[
\Pi_i: D\to D_i,\qquad \Pi_i(x,z,u) := (x,z_i,u).
\]
Let $P_i := \Pi_i(\mathcal{S})\subseteq D_i$. We claim that $P_i$ is \ac{rpi} for the $i$th subsystem.
Take any $(\bar x,\bar z_i,\bar u)\in P_i$. By definition, there exists $\bar z_{-i}$ such that $(\bar x,\bar z,\bar u)\in \mathcal{S}$.
Since $\mathcal{S}$ is \ac{rpi} for the full system, for any admissible uncertainty realization (equivalently, any $\gamma(\cdot)\in\Gamma(\bar x)$) and the induced successor
\[
\bar x^+ = \bar x + \bar uT_s,\qquad \bar z^+ = g(\bar x,\bar z),
\]
we have $(\bar x^+,\bar z^+, \tilde u)\in\mathcal{S}$ for all $\tilde u\in U$ (as required by the predecessor definition used in the paper).
By Assumption (i), $\bar z_i^+$ depends only on $(\bar x,\bar z_i)$ (and the guard $Obs_i$), hence $\bar z_i^+ = g_i(\bar x,\bar z_i)$ is exactly the successor of the $i$th subsystem under the same guard and uncertainty bounds (Assumption (iii)).
Projecting yields $(\bar x^+,\bar z_i^+,\tilde u)\in P_i$ for all $\tilde u\in U$ and all admissible uncertainties, i.e., $P_i$ is \ac{rpi} for the $i$th subsystem.
By maximality of $\mathcal{S}_i^{\max}$ on $D_i$, we obtain $P_i\subseteq \mathcal{S}_i^{\max}$, hence
\[
(x,z,u)\in\mathcal{S}\ \Rightarrow\ (x,z_i,u)\in \mathcal{S}_i^{\max}\ \ \forall i
\ \Rightarrow\ (x,z,u)\in \hat{\mathcal{S}}.
\]
Therefore, $\mathcal{S}\subseteq \hat{\mathcal{S}}$.

\medskip\noindent\emph{2) $\hat{\mathcal{S}}\subseteq \mathcal{S}$.}
We first show that $\hat{\mathcal{S}}$ is \ac{rpi} for the full system.
Take any $(\bar x,\bar z,\bar u)\in \hat{\mathcal{S}}$. By definition, $(\bar x,\bar z_i,\bar u)\in \mathcal{S}_i^{\max}$ for all $i$.
Let an admissible uncertainty realization be fixed (equivalently, choose any $\gamma(\cdot)\in\Gamma(\bar x)$), with successor
\[
\bar x^+ = \bar x + \bar uT_s,\qquad \bar z_i^+ = g_i(\bar x,\bar z_i),\ \ i=1,\dots,N.
\]
Because each $\mathcal{S}_i^{\max}$ is \ac{rpi} for the corresponding per-part subsystem and, by Assumptions (ii)–(iii), uses the same $X,U$, guards, and uncertainty bounds as the full system,
\[
(\bar x^+,\bar z_i^+,\tilde u)\in \mathcal{S}_i^{\max}\qquad \forall i,\ \forall \tilde u\in U.
\]
Thus, $(\bar x^+,\bar z^+,\tilde u)\in \hat{\mathcal{S}}$ for all $\tilde u\in U$ and all admissible uncertainties, i.e., $\hat{\mathcal{S}}$ is \ac{rpi} for the full system and satisfies $\hat{\mathcal{S}}\subseteq D$.
By maximality of $\mathcal{S}$ among \ac{rpi} subsets of $D$, we conclude $\hat{\mathcal{S}}\subseteq \mathcal{S}$.

Combining 1) and 2) gives $\mathcal{S}=\hat{\mathcal{S}}$.
\end{proof}
\begin{remark}
Proposition~\ref{prop:mod-eq-full} implies that the compositional invariant is \emph{sound} (no false alarms) and \emph{complete} (no misses) w.r.t.\ $\mathcal{S}\subseteq D$. Under static, known obstacles, the obstacle‑free portion of $X$ can be under‑approximated by a finite collection of convex polyhedral cells; this under‑approximation preserves soundness (see case study). Dynamic obstacles are not considered in this work.
\end{remark}

\subsection{Computing invariant sets}

The \ac{rpi} sets are obtained as \emph{greatest fixed points} of the monotone set operator
$T(S)=D\cap\text{Pre}(S)$, iterated from $S^{(0)}=D$ until convergence \cite{blanchini_set_1999}.
For linear (or modewise-affine) dynamics with polyhedral constraints, $\text{Pre}(\cdot)$ reduces to polyhedral operations (linear images, Minkowski sums, Pontryagin differences) and Boolean combinations, yielding polyhedral \ac{rpi} sets (exact or convergent approximations).
For the LPV states \eqref{eq:unc}, we use the standard \emph{vertex method}: compute (or intersect) invariants for each
vertex system $(a_i^{\max},b_i^{\max})$ consistent with the active region, which is conservative and monotone
with respect to multiplicative uncertainty \cite{blanchini_set_1999,shamma_overview_2012}.
We encode \eqref{eq:motion}–\eqref{eq:unc} as a hybrid LPV/PWA model with polyhedral constraints and compute the \ac{rpi}'s via fixed-point iteration.
To obtain the compositional invariants, we iterate the robust predecessor operator per mode under its guard and intersect across modes until a fixed point is reached; the result is a union of polytopes $\mathcal{S}_i^{\max}\subset D_i$ (Algorithm~\ref{alg:inv}).

\begin{algorithm}[t]
\caption{Per-part compositional invariant (offline)}\label{alg:inv}
\begin{algorithmic}[1]
\Require $X$, $U$, centers $p_i$, half-side $S$, bounds $z_i^{\mathrm{crit}}$, worst-case $(a_i^{\max},b_i^{\max})$, $T_s$, tolerance $\varepsilon$, max iters $L_{\max}$
\State Define guards: $\mathrm{Obs}_i=\{x:\|x-p_i\|_\infty\le S\}$, $\mathrm{Unobs}_i=X\setminus\mathrm{Obs}_i$
\State Define mode maps on $(x,z_i,u)$:
\Statex \hspace{\algorithmicindent} Obs: $x^+=x+uT_s$, $z_i^+=a_i^{\max}z_i$; \quad Unobs: $x^+=x+uT_s$, $z_i^+=b_i^{\max}z_i$
\State Initialize $S^{(0)}:=D_i:=X\times[0,z_i^{\mathrm{crit}}]\times U$
\For{$\ell=0,1,\dots,L_{\max}$}
  \State $\widehat S^{(\ell+1)} := \text{Pre}_{\text{Obs},i}\bigl(S^{(\ell)}\bigr)\ \cap\ \text{Pre}_{\text{Unobs},i}\bigl(S^{(\ell)}\bigr)$
  \State $S^{(\ell+1)} := D_i \cap \widehat S^{(\ell+1)}$ \Comment{intersect with joint box}
  \State \textbf{if} $S^{(\ell+1)}=S^{(\ell)}$ (or $\mathrm{dist}_H<\varepsilon$) \textbf{then} \textbf{break}
  \State \textbf{end if}
\EndFor
\State $\mathcal{S}_i^{\max} \leftarrow$ minimal union-of-polytopes representation of $S^{(\ell+1)}$ (remove redundant facets/sets)
\State \textbf{return} $\mathcal{S}_i^{\max}$
\end{algorithmic}
\end{algorithm}

\emph{Example (revisited).}
We compute two per-part invariants in $[x,y,\ z_i,\ v_x,v_y]$ (two modes per part: observed/unobserved) and, for comparison, a single full invariant in $[x,y,\ z_1,z_2,\ v_x,v_y]$ (three modes: observing $P_1$, observing $P_2$, observing none). We set $(a_{\max},b_{\max})=(0.80,1.15)$ in the observed/unobserved regions. Because the parts are denoted by squares, there is no geometric approximation gap. Table~\ref{tab:complexity} compares the complexity of the invariants in terms of state dimension, number of system modes, invariant polytopes and vertices, and computation time on a general purpose laptop.  

\begin{table}[t]
\centering
\caption{Running example: invariant complexity.}
\label{tab:complexity}
\setlength{\tabcolsep}{6pt}
\begin{tabular}{lcccccc}
\toprule
System & State Dim. & \# modes & \# P & \# Vertices & Comp.\,time\,[s] \\
\midrule
Part 1 & 3 & 2 & 12 & 72 & 2.4 \\
Part 2 & 3 & 2 & 18&  108 & 3.4 \\
Full   & 4 & 3 &  610& 9280 & 3319 \\
\bottomrule
\end{tabular}
\end{table}

\subsection{Monitoring using invariant sets}
\paragraph*{Using full invariant}
At time $k$, the system \eqref{eq:motion}–\eqref{eq:unc} is healthy iff $\xi_k\in\mathcal{S}$. Leaving $\mathcal{S}$ flags a failure or, if $u_k$ is also available and the resulting $\xi_{k+1}\not\in \mathcal{S}$ based on \eqref{eq:motion} -- an imminent failure.

\paragraph*{Using compositional invariants}
At time $k$, form $\xi_{i,k}=[x_k^\top,\,z_{i,k},\,u_k^\top]^\top$ (or drop $u_k$ and encode its bound in the offline construction) and test membership in $\mathcal{S}_i^{\max}$ for \emph{all} $i$. Following Proposition~1, the compositional decision is
\[
\text{healthy at }k \iff (x_k,z_{i,k},u_k)\in \mathcal{S}_i^{\max}\ \ \forall i\quad(\text{i.e., } \xi_k\in \hat{\mathcal{S}}).
\]
Membership reduces to halfspace checks against the facets of the polytopes in the union; for the compositional monitor the per–step cost scales linearly with $N$.

\begin{algorithm}[t]
\caption{Compositional monitoring (online)}\label{alg:online}
\begin{algorithmic}[1]
\Require $\{\mathcal{S}_i^{\max}\}_{i=1}^N$, measurements $(x_k,\{z_{i,k}\})$, optional $u_k$
\For{$i=1$ to $N$}
  \State $h_i \leftarrow \mathbf{1}\bigl\{(x_k,z_{i,k},u_k)\in \mathcal{S}_i^{\max}\bigr\}$ \Comment{polyhedral union membership}
\EndFor
\State \textbf{if} $\prod_{i=1}^N h_i = 1$ \textbf{then} \textbf{return} Healthy
\State \textbf{else} \textbf{return} Alert (report indices $\{i: h_i=0\}$ and nearest-facet margins)
\end{algorithmic}
\end{algorithm}

\emph{Example (revisited).}
For a measured triple $(x_k,z_{1,k},z_{2,k})$ (and optional $u_k$) the system is declared \emph{healthy} iff
\[
(x_k,z_{1,k},u_k)\in \mathcal{S}_1^{\max}\quad\text{and}\quad (x_k,z_{2,k},u_k)\in \mathcal{S}_2^{\max}.
\]
The full invariant $\mathcal{S}$ can also be used for monitoring analogously. On our platform on average, the compositional check ran in 0.03\,ms per step, while the check with the full invariant took 1\,ms per step without any numerical optimizations.

\section{Case Study}
We evaluate the proposed invariant-based monitor on a four-wheeled differential-drive robot tasked with persistently surveying a labyrinth-like indoor environment. The study follows the workflow (Figure~\ref{fig:overview}) introduced earlier: per-part robust invariants are synthesized offline from a LPV-hybrid model and operational limits; the monitor performs set-membership checks to raise anticipatory alerts. We only evaluate the \emph{compositional} invariant monitor, as computing the \emph{full} invariant is intractable.

\subsection{Experimental Setup}
\paragraph*{Robot}
The platform is a ground robot (Figure~\ref{fig:mon:obs1}) with footprint $0.44\times 0.40$\,m (mass 5\,kg), wheel encoders (16 pulses/rev), an ITG-3200 gyroscope (10\,Hz), and a Hokuyo URG-04LX 2D LiDAR (10\,Hz, 240$^\circ$ FOV). Low-level control runs on an STM32F746ZG (Cortex-M7, 1\,kHz), while high-level software runs on a Raspberry Pi 4B (8\,GB) with ROS~2; actuator bounds are $v\in[-0.26,0.26]$\,m/s, $\omega\in[-1,1]$\,rad/s and the state space is $x,y\in[-4,4]\times[-3,3]$\,m, $\theta\in[-\pi,\pi]$, with sampling time $T_s=0.1$\,s. 
\begin{figure}
  \centering
    \begin{subfigure}{0.35\textwidth}
    \includegraphics[width=\textwidth]{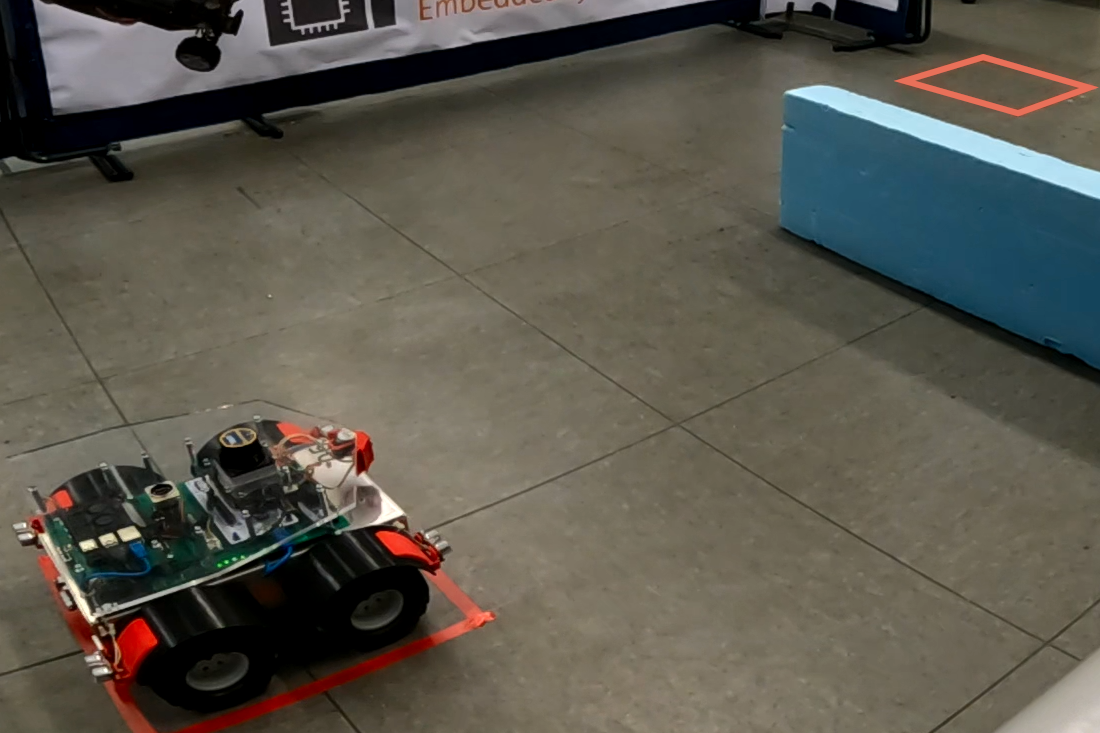}
    \caption{Robot navigation.}
    \label{fig:mon:obs1}
  \end{subfigure}
    \hfill
    \begin{subfigure}{0.48\textwidth}
    \includegraphics[width=\textwidth]{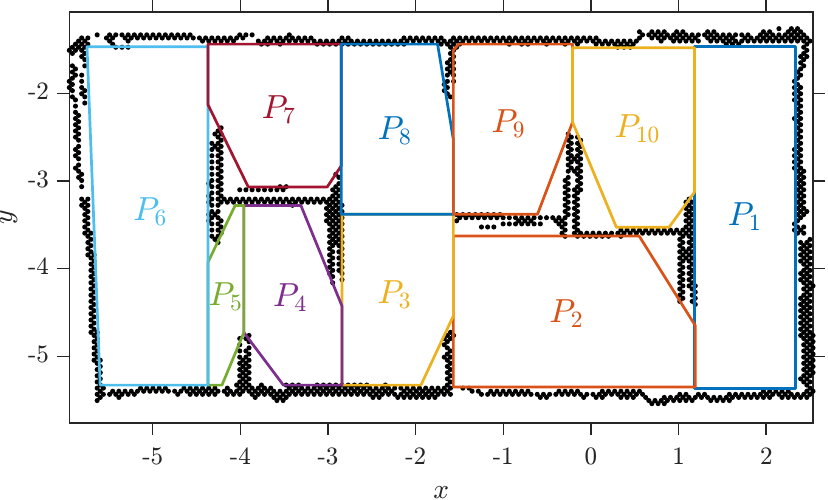}
    \caption{Labyrinth configuration.}
    \label{fig:mon:free1}
  \end{subfigure}
  \vspace{1ex}
  \caption{Labyrinth for persistent surveillance by a four‐wheeled differential‐drive robot. The labyrinth has been manually partitioned into 10 parts.}
  \label{fig:monitor}
\end{figure}

\paragraph*{Policy} We use a simple uncertainty–greedy patrol policy: the robot selects the partition with the largest current uncertainty $z_i$, plans a path to the center of that partition, and dwells until $z_i$ drops below a target level $z_i^{\mathrm{tar}}$; it then reselects the next part. A small hysteresis band on $z_i$ and a minimum inter-visit time $\tau_{\min}$ prevent chattering. Local motion is handled by ROS Nav2.

\paragraph*{Environment}
The traversable obstacle-free labyrinth workspace is manually under-approximated by a finite number of partially adjacent convex polyhedral cells that serve as observability regions and mode domains in the hybrid model. The resulting map and labels are shown in Figure~\ref{fig:mon:free1}. Each partition $P_i$ is equipped with an uncertainty proxy $z_i$ that evolves multiplicatively according to the LPV model with part–specific factors $(a_i,b_i)$, following
$
z_{i,k+1}=\gamma_i(x_k)\,z_{i,k},
$
with $\gamma_i(x)\in[a_i^{\max},1)$ when $x\in P_i$ (observed) and $\gamma_i(x)\in(1,b_i^{\max}]$ otherwise. $z_{max}=10$ for all parts.

\paragraph*{Invariant set computation}
We conservatively instantiate the input set as $U=\{u:\|u\|_{\infty}\leq 0.26\}$ obtained from platform bounds, with an extra inflation $\delta_u=0.1$ to cover Nav2 tracking error. This over-approximates the instantaneous body-frame linear velocity projected in the world frame; heading variation within one sample is absorbed by universal quantification over $u\in U$. For each part $i$ we build a two-mode PWA subsystem in $(x,z_i,u)$ with guards $x\in P_i$ vs.\ $x\notin P_i$ and compute the maximal \ac{rpi} set $S^{\max}_i\subset X\times[0,z_i^{\mathrm{crit}}]\times U$ via fixed-point iteration of the predecessor operator under universal quantification over inputs and LPV vertices (Algorithm~\ref{alg:inv}). Computation took between 111\,s and 154\,s for each compositional invariant on a general-purpose laptop.

\subsection{Monitoring}
From each run we obtain time-stamped robot poses $x_k\in\mathbb{R}^2$ (filtered using a Kalman filter) and per-part uncertainty proxies $z_{i,k}$ (derived from perception/coverage events).
We derive the uncertainty factors $(a_i,b_i)$ online from LiDAR measurements as follows.

\paragraph*{Uncertainty update}
We have excluded the LiDAR detections of the initial mapping pass shown in Figure~\ref{fig:mon:free1} when defining the partitions. To emulate time-varying uncertainty, additional small objects were placed in the labyrinth during operation without preventing the robot from moving freely through all parts. New detections are accumulated per partition in a counter $C_i(k)>0$ at time $k$ (detections since the last time the robot entered $P_i$). When the robot \emph{re-enters} $P_i$ at time $k$, we map $r_i(k) \;:=\; C_i(k)/C_i(k-1)$ monotonically into the decay range $[a_i^{\min},a_i^{\max}]$ by
\[
a_i(k_{\text{in}}) \;=\; \operatorname{clip}\!\Big(a_i^{\min} + \frac{\min\{r_i,\ r_i^{\max}\}-1}{\,r_i^{\max}-1\,}\,\big(a_i^{\max}-a_i^{\min}\big)\Big).
\]
Here $r_i^{\max}>1$ is a chosen saturation point and $\operatorname{clip}(\cdot)$ enforces the interval $[a_i^{\min},a_i^{\max}]$. Intuitively, more detections drive $a_i$ upward toward $a_i^{\max}<1$, meaning uncertainty decays more slowly even when the partition is observed.

For simplicity in this case study, the unobserved growth factor $b_i$ was kept constant, i.e., $b_i\in(1,b_i^{\max}]$ independent of $r_i$. We did not investigate removing obstacles; thus, $C_i^{\mathrm{new}}$ is non-decreasing between entries and $a_i$ is piecewise-constant and non-decreasing across visits. The parameters are updated while moving through $P_i$. The same $z_i$ sequences are then fed to the monitor according to Algorithm~\ref{alg:online}.

\paragraph*{Evaluation}
We monitor a run over multiple minutes in the labyrinth with a failing patrol that loiters and starves two parts, causing uncertainty growth beyond admissible bounds. For reference, we compare against a threshold-only rule that raises an alert when any $z_{i,k}\ge z_i^{\mathrm{crit}}$. Further, we evaluate:
(i) \emph{Detection lead time} from the monitor’s first alert to the threshold crossing $z_{i,k}\ge z_i^{\mathrm{crit}}$ (positive values indicate anticipatory detection); (ii) \emph{false alerts} (alerts without subsequent violation within a grace window); and (iii) checking computation time.

\subsection{Results}
\begin{figure}
  \centering
    \begin{subfigure}{0.48\textwidth}
    \includegraphics[width=\textwidth]{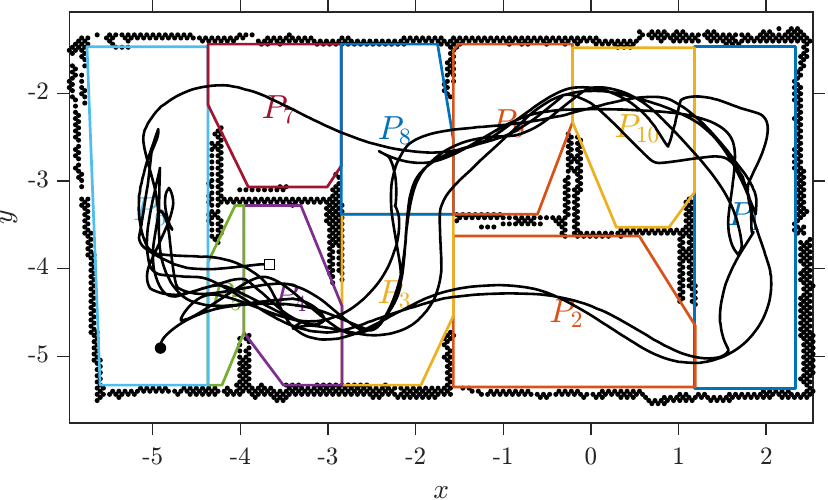}
    \caption{Robot trajectory in labyrinth.}
    \label{fig:mon:traj}
  \end{subfigure}
    \hfill
    \begin{subfigure}{0.48\textwidth}
    \includegraphics[width=\textwidth]{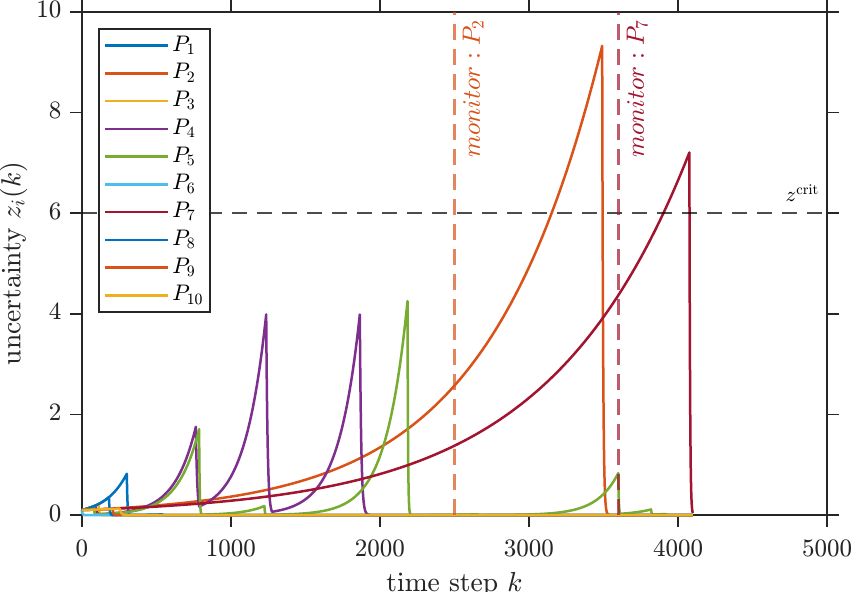}
    \caption{Uncertainty evolution and monitor triggers.}
    \label{fig:mon:uncertain}
  \end{subfigure}
  \vspace{1ex}
  \caption{Labyrinth for persistent surveillance by a four‐wheeled differential‐drive robot. The labyrinth has been manually partitioned into 10 parts.}
  \label{fig:monitor1}
\end{figure}

Figure~\ref{fig:mon:traj} shows the robot position trajectory overlaid on the ten-part partition; Figure~\ref{fig:mon:uncertain} plots the corresponding uncertainty traces and marks monitor triggers. The invariant-based monitor raises alerts \emph{before} uncertainties hit the critical threshold, providing anticipatory warnings; the threshold-only baseline, by definition, reacts at (or after) violation. No false alerts are produced by the monitor.

During the run patrolling fails, loitering in a subset of the maze leads to starvation of two parts; the monitor exits the invariant set well before $z_i$ crosses $z_i^{\mathrm{crit}}$, yielding positive lead time. Subsequent growth of $z_i$ confirms the early warning, and the parts reported by the monitor match the regions visibly under-serviced in the trajectory plot. 

We ran three independent trials per configuration with different random seeds for patrol initialization and obstacle placements. Each of them yielded similar results in terms of monitoring outcomes. On average, per-step runtime for checking polyhedral union membership over all parts was 0.3\,ms; storing per-part unions and their facet matrices dominates memory consumption. We plan to release the scripts for generating the experimental evaluation.

\section{Discussion}
This section reflects on assumptions, practical trade-offs in using the proposed monitor, and directions for extensions.

\paragraph*{Guarantees and assumptions}
The completeness and soundness of the monitor based on compositional \ac{rpi} relies on two conditions: (i) mutual \emph{independence} of the part-wise uncertainties, and (ii) \emph{robust} (universal) quantification over the same state and input space, guards, and LPV bounds. Condition~(i) is appropriate when each uncertainty state is a local proxy driven by sensing staleness/coverage (e.g., time-since-last-observation or map freshness). However, global events may couple the uncertainty of parts. For example, wind-driven smoke propagation in fire monitoring can raise uncertainty beyond one part. In this case, one can cluster coupled parts into multi-part subsystems, albeit at the cost of higher \ac{rpi} complexity. 
In the case study, the under-approximation of the traversable space yields \emph{conservative} invariants. Thus, within the under-approximated area runtime alerts remain \emph{sound} but can be \emph{conservative}.

\paragraph*{Measurement noise and model mismatch}
With noisy $x_k$ or $z_{i,k}$, sharp set boundaries can induce chattering. Two possible standard remedies are (i) \emph{guard erosion}/set inflation: shrink $\mathrm{Obs}_i$ by a margin $\delta$ or inflate the invariant to accommodate estimation error; and (ii) \emph{hysteresis}: declare exit only after $m$ consecutive violations or require a margin $\eta>0$ beyond the boundary. If $u_k$ is not available, quantify over $U$ offline; if it is available, slice the union by the measured $u_k$ to reduce conservatism.

\paragraph*{Complexity}
The offline computation of the \ac{rpi}'s is based on a fixed-point iteration of robust predecessor operators with polyhedral set operations. In the worst case, both runtime and representation size grow exponentially with the state dimension and with the number of halfspaces/vertices. In our compositional construction, this cost is incurred in the dimension of $[x^\top,z_i]^\top$, whereas the full-system \ac{rpi} requires the same operations in $[x^\top,z_1,\ldots,z_N]^\top$. As $N$ increases, computation and storage become quickly prohibitive, which is reflected in Table~\ref{tab:complexity}, where the full invariant has orders-of-magnitude more vertices and higher runtime.

Online runtime scales with the number of polytopes and facets. For compositional monitoring, this scales roughly linearly with $N$ (each membership test is independent), and unions can be compressed via polytope merging, facet pruning, or spatial indexing (e.g., k-d trees on polytope bounding boxes). Upon memory limitations, the union can be approximated by a single outer polytope around each $\mathcal{S}_i^{\max}$ and the approximation gap can be tracked.

\paragraph*{Choice of thresholds}
The thresholds $z_i^{\mathrm{crit}}$ encode the surveillance objective and reflect sensing fidelity, revisit constraints, and acceptable risk. Unequal priorities are handled by part-specific $z_i^{\mathrm{crit}}$ and by weighting the reporting (e.g., alert sooner on critical parts using larger safety margins).

\paragraph*{Observability and motion models}
Both the running example and the case study used polyhedral representations of the environment. For circular cones, occlusions, or complex visibility polygons, inner/outer polyhedral approximations to trade tightness for tractability can be used. The single-integrator kinematics can be replaced by other linear (or modewise linearized) models. The invariant computation proceeds analogously, albeit in a higher-dimensional space and with possibly significantly higher computational complexity.

\paragraph*{Adversarial robustness and cyber–physical attacks}
Because the monitor relies on measured state, integrity attacks that spoof these channels can mask violations. Practical mitigation includes cross-checks between commanded $u_k$ and observed motion (residual tests), redundant sensing for $z_{i,k}$ (e.g., overlapping footprints), and secure logging/attestation for the signals used by the monitor. From a synthesis viewpoint, suspicious channels can be treated as adversarial disturbances by enlarging the LPV bounds, which preserves soundness at the cost of more conservative alerts.

\paragraph*{Limitations}
Exact computation of \ac{rpi} in high dimensions remains challenging; our compositional construction mitigates, but cannot remove, the intrinsic cost for very large $N$ without sacrificing some tightness.

\paragraph*{Extensions}
The framework extends to 3D workspaces, to multi-robot surveillance (modularity remains sound if interactions are conservatively bounded), and to stochastic envelopes (replace worst-case vertices by chance-constrained predecessors to trade false positives for false negatives). Hybridization with other monitoring objectives is another natural extension, e.g., combining with barrier-methods-based monitors for richer early-warning signals, or adding look-ahead for imminent failures.

\section{Conclusion}
We introduced a runtime monitoring framework for persistent surveillance that treats the autonomy stack as a black box. The key idea is a compositional construction of invariant sets for each to-be-surveyed part and checking their conjunction online. Under independence and common-domain assumptions, we showed that the compositional monitor is equivalent to the full-system-based monitor. The monitor requires only constant-time polyhedral membership tests online and scales linearly with the number of parts. A laboratory case study with a mobile robot in a labyrinth illustrated how the construction is instantiated from data and how impending instability can be detected early.

Future work will include incorporating coupling or disturbances in per-part predecessors and extending to multi-robot teams with collision and communication constraints.

\bibliographystyle{IEEEtran}
\bibliography{IEEEabrv,ICRA2026}

@article{smith_persistent_2011,
	title = {Persistent ocean monitoring with underwater gliders: {Adapting} sampling resolution},
	volume = {28},
	number = {5},
	urldate = {2026-02-26},
	journal = {Journal of Field Robotics},
	author = {Smith, Ryan N. and Schwager, Mac and Smith, Stephen L. and Jones, Burton H. and Rus, Daniela and Sukhatme, Gaurav S.},
	year = {2011},
	pages = {714--741},
}

@article{casbeer_cooperative_2006,
	title = {Cooperative forest fire surveillance using a team of small unmanned air vehicles},
	volume = {37},
	number = {6},
	urldate = {2026-02-26},
	journal = {Int. J. Syst. Sci.},
	publisher = {Taylor \& Francis},
	author = {Casbeer, David W. and Kingston, Derek B. and Beard, Randal W. and McLain, Timothy W.},
	year = {2006},
	pages = {351--360},
}

@article{chen_deep_2020,
	title = {Deep {Reinforced} {Learning} {Tree} for {Spatiotemporal} {Monitoring} {With} {Mobile} {Robotic} {Wireless} {Sensor} {Networks}},
	volume = {50},
	number = {11},
	urldate = {2026-02-26},
	journal = {IEEE Trans. Syst., Man, Cybern., Syst.},
	author = {Chen, Jiahong and Shu, Tongxin and Li, Teng and de Silva, Clarence W.},
	year = {2020},
	pages = {4197--4211},
}

@article{ha_periodic_2019,
	title = {On periodic optimal solutions of persistent sensor planning for continuous-time linear systems},
	volume = {99},
	urldate = {2026-02-26},
	journal = {Automatica},
	author = {Ha, Jung-Su and Choi, Han-Lim},
	year = {2019},
	pages = {138--148},
}

@inproceedings{lan_planning_2013,
	title = {Planning periodic persistent monitoring trajectories for sensing robots in {Gaussian} {Random} {Fields}},
	urldate = {2026-02-26},
	booktitle = {IEEE Int. Conf. Robot. Autom. (ICRA)},
	author = {Lan, Xiaodong and Schwager, Mac},
	year = {2013},
	pages = {2415--2420},
}

@inproceedings{hall_persistent_2025,
	title = {Persistent {Monitoring} {Trajectory} {Optimization} in {Partitioned} {Environments}},
	urldate = {2026-02-26},
	booktitle = {{American} {Control} {Conf.} ({ACC})},
	author = {Hall, Jonas and Cassandras, Christos G. and Andersson, Sean B.},
	year = {2025},
	pages = {3813--3818},
}

@article{blanchini_set_1999,
	title = {Set invariance in control},
	volume = {35},
	number = {11},
	urldate = {2026-02-26},
	journal = {Automatica},
	author = {Blanchini, F.},
	year = {1999},
	pages = {1747--1767},
}

@article{yu_persistent_2015,
	title = {Persistent {Monitoring} of {Events} {With} {Stochastic} {Arrivals} at {Multiple} {Stations}},
	volume = {31},
	number = {3},
	urldate = {2026-02-26},
	journal = {IEEE Transactions on Robotics},
	author = {Yu, Jingjin and Karaman, Sertac and Rus, Daniela},
	year = {2015},
	pages = {521--535},
}

@inproceedings{jones_information-guided_2015,
	title = {Information-guided persistent monitoring under temporal logic constraints},
	urldate = {2026-02-26},
	booktitle = {2015 {American} {Control} {Conference} ({ACC})},
	author = {Jones, Austin and Schwager, Mac and Belta, Calin},
	year = {2015},
	note = {ISSN: 2378-5861},
	pages = {1911--1916},
}

@article{zhou_optimal_2018,
	title = {Optimal {Event}-{Driven} {Multiagent} {Persistent} {Monitoring} of a {Finite} {Set} of {Data} {Sources}},
	volume = {63},
	number = {12},
	urldate = {2026-02-26},
	journal = {IEEE Trans. Autom. Control},
	author = {Zhou, Nan and Yu, Xi and Andersson, Sean B. and Cassandras, Christos G.},
	year = {2018},
	pages = {4204--4217},
}

@inproceedings{smith_persistent_2011-1,
	title = {Persistent monitoring of changing environments using a robot with limited range sensing},
	urldate = {2026-02-26},
	booktitle = {IEEE Int. Conf. Robot. Autom. (ICRA)},
	author = {Smith, Stephen L. and Schwager, Mac and Rus, Daniela},
	year = {2011},
	pages = {5448--5455},
}

@article{leucker_brief_2009,
	series = {The 1st {Workshop} on {Formal} {Languages} and {Analysis} of {Contract}-{Oriented} {Software} ({FLACOS}’07)},
	title = {A brief account of runtime verification},
	volume = {78},
	number = {5},
	urldate = {2026-02-26},
	journal = {The Journal of Logic and Algebraic Programming},
	author = {Leucker, Martin and Schallhart, Christian},
	year = {2009},
	pages = {293--303},
}

@inproceedings{maler_monitoring_2004,
	address = {Berlin, Heidelberg},
	title = {Monitoring {Temporal} {Properties} of {Continuous} {Signals}},
	language = {en},
	booktitle = {Formal {Techniques}, {Modelling} and {Analysis} of {Timed} and {Fault}-{Tolerant} {Systems}},
	publisher = {Springer},
	author = {Maler, Oded and Nickovic, Dejan},
	editor = {Lakhnech, Yassine and Yovine, Sergio},
	year = {2004},
	pages = {152--166},
}

@inproceedings{nenchev_monitoring_2026,
	address = {Cham},
	title = {Monitoring {Progress} and {Failure} in {Autonomous} {Robot} {Navigation}: {A} {Case} {Study}},
	language = {en},
	booktitle = {Runtime {Verification}},
	publisher = {Springer Nature Switzerland},
	author = {Nenchev, Vladislav and Sotiriadis, Prodromos},
	editor = {Könighofer, Bettina and Torfah, Hazem},
	year = {2026},
	pages = {317--335}
}

@article{mitchell_time-dependent_2005,
	title = {A time-dependent {Hamilton}-{Jacobi} formulation of reachable sets for continuous dynamic games},
	volume = {50},
	number = {7},
	urldate = {2026-02-26},
	journal = {IEEE Trans. Autom. Control},
	author = {Mitchell, I.M. and Bayen, A.M. and Tomlin, C.J.},
	year = {2005},
	pages = {947--957},
}

@article{ames_control_2017,
	title = {Control {Barrier} {Function} {Based} {Quadratic} {Programs} for {Safety} {Critical} {Systems}},
	volume = {62},
	number = {8},
	urldate = {2026-02-26},
	journal = {IEEE Trans. Autom. Control},
	author = {Ames, Aaron D. and Xu, Xiangru and Grizzle, Jessy W. and Tabuada, Paulo},
	year = {2017},
	pages = {3861--3876},
}

@article{mayne_model_2014,
	title = {Model predictive control: {Recent} developments and future promise},
	volume = {50},
	number = {12},
	urldate = {2026-02-26},
	journal = {Automatica},
	author = {Mayne, David Q.},
	year = {2014},
	pages = {2967--2986},
}

@incollection{shamma_overview_2012,
	address = {Boston, MA},
	title = {An {Overview} of {LPV} {Systems}},
	booktitle = {Control of {Linear} {Parameter} {Varying} {Systems} with {Applications}},
	publisher = {Springer US},
	author = {Shamma, Jeff S.},
	editor = {Mohammadpour, Javad and Scherer, Carsten W.},
	year = {2012},
	pages = {3--26},
}

@article{fainekos_robustness_2009,
	title = {Robustness of temporal logic specifications for continuous-time signals},
	volume = {410},
	number = {42},
	urldate = {2026-02-26},
	journal = {Theoretical Computer Science},
	author = {Fainekos, Georgios E. and Pappas, George J.},
	year = {2009},
	pages = {4262--4291},
}

@inproceedings{nenchev_receding_2016,
	title = {Receding {Horizon} {Robot} {Control} in {Uncertain} {Environments} with {Temporal} {Logic} {Constraints}},
	booktitle = {15th {European} {Control} {Conf}. ({ECC})},
	author = {Nenchev, V. and Belta, C.},
	year = {2016},
	pages = {2614--2619},
}

@article{nenchev_event-driven_2018,
	title = {Event-driven optimal control for a robotic exploration, pick-up and delivery problem},
	volume = {30},
	journal = {Nonlinear Analysis: Hybrid Systems},
	author = {Nenchev, V. and Cassandras, C. G. and Raisch, J.},
	year = {2018},
	pages = {266--284},
}

\end{document}